\documentclass[10pt]{article}
\usepackage{arxiv}


\usepackage{amsmath,amsfonts,bm}









\def\eqref#1{equation~\ref{#1}}









\def\1{\bm{1}}










\DeclareMathAlphabet{\mathsfit}{\encodingdefault}{\sfdefault}{m}{sl}
\SetMathAlphabet{\mathsfit}{bold}{\encodingdefault}{\sfdefault}{bx}{n}













\usepackage{natbib}
\usepackage{wrapfig}
\usepackage{hyperref}
\usepackage{url}
\usepackage{enumitem}
\usepackage{algorithm}
\usepackage[noend]{algpseudocode}\algrenewcommand\algorithmicrequire{\textbf{Input:}}

\usepackage{amsmath}
\usepackage{amssymb}
\usepackage{mathtools}
\usepackage{amsthm}

\usepackage{thm-restate}
\usepackage{nicefrac}
\usepackage{tikz}
\usetikzlibrary{calc,fit,positioning,arrows}
\usetikzlibrary{arrows.meta}

\theoremstyle{plain}
\newtheorem{theorem}{Theorem}[section]

\theoremstyle{definition}
\newtheorem{definition}[theorem]{Definition}

\theoremstyle{remark}

\newcommand{\abs}[1]{\left\vert #1\right\vert}

\newcommand{\N}{\mathbb{N}}

\DeclarePairedDelimiterX{\infdivx}[2]{(}{)}{%
  #1\;\delimsize\|\;#2%
}

\usepackage{xcolor}

\title{Theoretical Barriers in Bellman-Based Reinforcement Learning}

\author{Brieuc Pinon, Raphaël Jungers, Jean-Charles Delvenne\\
Department of Mathematical Engineering\\
UCLouvain\\
Belgium \\
\texttt{\{brieuc.pinon,raphael.jungers,jean-charles.delvenne\}@uclouvain.be}
}

\begin{document}

\maketitle

\begin{abstract}
    Reinforcement Learning algorithms designed for high-dimensional spaces often enforce the Bellman equation on a sampled subset of states, relying on generalization to propagate knowledge across the state space. In this paper, we identify and formalize a fundamental limitation of this common approach. Specifically, we construct counterexample problems with a simple structure that this approach fails to exploit. Our findings reveal that such algorithms can neglect critical information about the problems, leading to inefficiencies. Furthermore, we extend this negative result to another approach from the literature: Hindsight Experience Replay learning state-to-state reachability.
\end{abstract}

\section{Introduction}
    One of the main goals of Artificial Intelligence is to devise universal algorithmic ideas to solve problems as efficiently as possible. One typical application is Automated Theorem Proving (ATP), which serves as a running example application here. In ATP, solving a problem involves finding a proof for a theorem from a set of axioms within the constraints of a logic system. Since this must be done under constrained computational resources, algorithms must be designed to find a solution quickly.

    A promising direction to design efficient general-purpose algorithms is to blend search and learning, where learning is leveraged to accelerate the search process. These algorithms iteratively attempt to construct solutions, leveraging feedback from previous attempts to learn to guide the next constructions.

    Reinforcement Learning (RL) implements this paradigm to optimize objectives over sequences of decisions, where each decision-making point is associated with a state. This structure allows Dynamic Programming to be applied in the form of the Bellman equation to learn a value function. RL algorithms typically sample sequences of states guided by the value function while simultaneously improving this value function by enforcing the Bellman equation on the sampled states. This approach forms the foundation of many RL algorithms, with various adaptations \citep{sutton2018reinforcement}.

    This paper examines the theoretical limitations of this RL approach to enhance our understanding and support the development of more effective algorithms. 

    Intuitively, the results of this paper formalize the incapacity of these algorithms to efficiently ``learn from failure''. For instance, in ATP, the probability of proving a theorem with randomly sampled logical rules is often low, this leads to sparse rewards and might hinder the capacity of an algorithm to learn. While proving an incapacity to solve a problem with sparse rewards without any other information is trivial ---since any algorithm would resort to exhaustive search--- we analyze the non-trivial case: when the algorithm can leverage a priori information to help his search and learn from its failures.
    
    To study this limitation, we analyze a concrete RL algorithm that applies the Bellman equation with Bayesian Learning. The algorithm starts with a set of hypothesis value functions and iteratively refines this prior through the Bellman equation. Bayesian Learning provides us with an idealized framework for incorporating prior knowledge while refining it with experience.
    
    Under natural assumptions on the initial prior, we prove a key limitation. Despite the strengths of the RL approach and the learning procedure, the algorithm struggles with certain counterexample problems with an ``easy'' design. In contrast, we show that a classical symbolic algorithm is not subject to such a limitation. This suggests that the highlighted limitation is not a fatality but rather a shortcoming due to algorithmic choices.

    The counterexamples are based on the following principle: combining a set of problems into one aggregated problem that should be (with an appropriate strategy) nearly as easy to solve as the original sub-problems. However, by carefully designing the aggregation, we can obscure the information of each individual sub-problem, forcing the Bellman equation-based RL algorithm to tackle all original sub-problems simultaneously rather than breaking them down into independent, simpler tasks.
    
    A potential remedy to this limitation is expanding the feedback leveraged by the algorithm. One such approach is Hindsight Experience Replay (HER) \citep{andrychowicz2017hindsight}. HER builds upon the Bellman equation to learn \emph{universal value functions} \citep{sutton2011horde,schaul2015universal}. Assuming that the objective is to reach some goal-state, a standard value function only estimates the value of a state in terms of its ability to reach this goal. In contrast, a universal value function predicts the reachability of any state from any other state. HER utilizes the states encountered during the learning process to learn this universal value function, leveraging richer feedback than a simple binary outcome of whether the goal was reached or not. With this consideration, several works in the literature apply HER in ATP \citep{aygun2022proving,trinh2024solving,poesia2024learning}. 

    However, this new feedback also misses critical information to solve our counterexamples efficiently. We straightforwardly extend our analysis to HER with state-to-state universal value functions, proving that the same limitation applies.
    
    \paragraph{Related work} Close to our work, \citet{sun2019model} constructed a family of problems that model-free RL methods struggle to solve efficiently compared to a model-based method with access to a predetermined goal. Subsequent research expanded these findings by demonstrating a similar limitation broadened to some model-based methods, moreover, it did not require a priori knowledge of a goal \citep{pinon2025limitation}. Both of these works highlighted limitations in Bellman equation-based methods by embedding critical information within unknown dynamics.

    \paragraph{Contributions} Instead of relying on uncertainty in the dynamics, our analysis builds on the computational limitations of evaluating (universal) value functions. This method enables our contributions:
    \begin{itemize}
        \item Demonstrating a limitation for an algorithm based on the Bellman equation on a family of problems with an a priori known common dynamics across the problems. This extends the applications in which an inefficiency for a Bellman equation-based method can be expected, particularly for Automated Theorem Proving.
        \item Formalizing a limitation for an algorithm relying on the Bellman equation with universal value functions to learn state-to-state reachability, such as in Hindsight Experience Replay.
    \end{itemize}

    \paragraph{Outline} In the preliminaries, we provide convenient notations and definitions. In Section \ref{sec:Bell-value}, we present an algorithm implementing the Bellman equation applied to learn a value function, and then derive a limitation for that algorithm. In Section \ref{sec:Bell-universal}, we extend that limitation to an HER algorithm. Finally, in Section \ref{sec:disc}, we discuss our results and conclude.

\section{Preliminaries}
    We note $[n]=\{1,\ldots, n\}$ the set of the $n$ first natural numbers. For a vector $x\in X^n$, with some set $X$ and $n\in\N$, we note $x_{\leq i}(/x_{<i})$ the vector restricted to the first $i(/i-1)$th coordinates. An index list $I$ over $n\in\N$ is a sequence of numbers in $[n]$. For $I$ an index list over $n$, $x_I$ is the vector composed of the values of $x$ at the coordinates in $I$. For $x$ and $y$ two vectors, we note $[x,y]$ their concatenation.

    Our counterexamples are based upon the \textit{Boolean satisfiability} problem, we define here a classical form of this problem. Given a vector of $n\in\N$ Boolean variables $x_i$ ($i\in [n]$): a \textit{literal} is one of these variables $x_i$ or its negation $\neg x_i$; a \textit{clause} is a set of literals joined by disjunctions $\lor$. Finally, a \textit{conjunctive normal form satisfiability} (CNF-SAT) instance is a set of clauses over the $n$ variables and a solution for the instance is a vector $x\in\{\mathrm{False},\mathrm{True}\}^n$ such that all the clauses evaluate to $\mathrm{True}$ under the interpretation given by $x$. We denote this evaluation function checking a binary vector $x$ against a CNF-SAT instance $p$ $\mathrm{Check}(x;p)$.

    We take two liberties with respect to this formalism. One, we make no distinction between the Boolean and binary space by confounding $0(/1)$ and $\mathrm{False}(/\mathrm{True})$ respectively. Two, in all the formal parts of the paper, we keep the name ``CNF-SAT instance'' (following the computational complexity literature) but we also denote them by the generic term of ``problem'' in other parts of the paper.

\section{Limitation on the Bellman Equation with Value Functions}\label{sec:Bell-value}
    
    We identify a limitation of Bellman equation-based algorithms when applied to solving CNF-SAT instances. Specifically, we show that these algorithms can fail to exploit structure in aggregated instances. We demonstrate this through Algorithm \ref{alg:BE_search}, which implements the Bellman equation to learn a value function and guide the search towards solutions for a CNF-SAT instance. Theorem \ref{thm:BE} establishes that the algorithm does not effectively decompose some counterexample aggregated instances, resulting in an exponential runtime with respect to the number of instances.
    
    \subsection{The Algorithm}
        To solve a CNF-SAT instance using RL, we formulate the problem as a sequence of binary decisions. An instance with $n$ binary variables is modeled as a sequence of $n$ binary decisions that form a candidate solution. For each decision, the state is defined as the SAT instance and the first $i$ fixed variables, while the next action involves assigning a value ($0$ or $1$) to the $i+1$th variable. The RL algorithm seeks to optimize these decisions under the objective of maximizing $1$ for a solution and $0$ otherwise.

        To guide the generation toward a solution, our RL algorithm leverages value functions learned by enforcing the Bellman equation. We now define these value functions, their optimality, and the Bellman equation.

        \begin{definition}
            A \emph{value function} $v(x_{\leq i};p)$ maps a CNF-SAT instance $p$ over $n$ variables and a partial assignment $x_{\leq i}\in\{0,1\}^i$ with $i\in [n]$ to $\{0,1\}$.
        \end{definition}
    
        \begin{definition}
            A value function $v$ is \emph{optimal} if for any CNF-SAT instance $p$ with $n$ variables and any partial assignment $x_{\leq i}$, $v(x_{\leq i};p)=1$ iff there exists an extension $y\in\{0,1\}^{n-i}$ such that $\mathrm{Check}([x_{\leq i},y];p)$ is True.
        \end{definition}

        In our case, the Bellman equation applied on some value function $v$ at some partial assignment $x_{\leq i}$ is:
        \begin{equation} \label{eq:Bellman}
            v(x_{\leq i};p)=\max\{v([x_{\leq i},0];p),\,v([x_{\leq i},1];p)\},
        \end{equation}
        where $i\in [n-1]$ and $p$ is the CNF-SAT instance to solve. Given that our setup involves deterministic transitions, this expression does not use expectations. Also, since the objective is binary, we use value functions with a binary output.

        The Bellman equation must also enforce consistency with $\mathrm{Check}$ when evaluating a complete assignment. This additional constraint reads $v(x;p)=\mathrm{Check}(x;p)$ where $x$ is any complete assignment.
        
        \begin{algorithm}[t]
        \caption{Learning value functions with the Bellman equation for CNF-SAT.\\
        The algorithm iteratively samples candidate solutions guided by a hypothesis set of value functions. In parallel, the Bellman equation is iteratively enforced on the set. The sampling procedure follows $\pi^{V^t}(p)$ defined in Definition \ref{def:pi}.}
        \label{alg:BE_search}
        \textbf{Inputs:} $V^0:$ an initial set of value functions; $p:$ the CNF-SAT instance to solve with $n$ variables
        \begin{algorithmic}
            \For{$t\gets 0,1,\ldots$}
                \State $x^t\sim \pi^{V^t}(p)$
                \If{$\mathrm{Check}(x^t;\,p)$}
                    \State \textbf{output} $x^t$
                \EndIf
                \State $V^{t+1}\gets \{v\in V^t|\,v(x^t;p)=0\}$
                \For{$i\gets 1,\ldots,n-1$}
                    \State $V^{t+1}\gets\{v\in V^{t+1}|\, v(x_{\leq i}^t;p)=\max\{v([x_{\leq i}^t,0];p),\,v([x_{\leq i}^t,1];p)\}\}$
                \EndFor
            \EndFor
        \end{algorithmic}
        \end{algorithm}

        We now define Algorithm \ref{alg:BE_search}, it iteratively generates candidate solutions guided by values' estimates while simultaneously improving these estimates through the Bellman equation.
        
        The algorithm starts with an initial set of hypotheses for the value functions, denoted as $V^0$, and iteratively samples sequences of decisions to construct candidate solutions, guided by the current set $V^t$. At each step, the Bellman equation is applied to eliminate inconsistent value functions from $V^t$, continuing this process until a solution is found.
    
        The initial hypothesis set $V^0$ encodes a prior over possible value functions. This prior can incorporate knowledge about solving SAT instances, value functions excluded from $V^0$ reduce the set of possibilities and potentially accelerate the search. This knowledge may originate from some human-coded prior or from training on other SAT instances.
    
        Likewise, the set $V^0$ also encodes uncertainties. When $V^0$ contains value functions providing different estimates, this reflects uncertainty and can force the algorithm to explore multiple possibilities. These possibilities are then progressively discarded with the Bellman equation.

        Algorithm \ref{alg:BE_search} thus implements a form of Bayesian Learning where a set of hypotheses is iteratively refined based on new evidence. Traditional Bayesian Learning employs a probabilistic distribution over the hypothesis space, but in this case, Equation \ref{eq:Bellman} provides a deterministic condition for consistency. This eliminates the need for a probabilistic framework, allowing us to work directly with the set of hypotheses. However, we note that the results presented here are transposable to the case where probabilistic distributions are used.

        The sampling of candidate solutions is guided by the current set of value functions $V^t$, and the sampling procedure is formalized as follows:
        \begin{definition}\label{def:pi}
            For a given set of value functions $V$, we define $\pi^V$ as a mapping from any CNF-SAT instance $p$ with $n$ variables to a probability distribution over the space of binary candidate solutions $x=[x_0,\ldots,x_n]\in\{0,1\}^n$.

            Let $V_{z,p}$ denote the number of value functions in $V$ that evaluate to $1$ for inputs $z\in\{0,1\}^{[n]}$ and $p$, i.e., $\abs{\{v\in V|\,v(z;p)=1\}}$.

            The probability distribution is defined incrementally as
            \begin{equation}\label{eq:pi}
                \pi^V(x_i=0;p,x_{<i})=\frac{V_{[x_{<i},0],p}}{V_{[x_{<i},0],p}+V_{[x_{<i},1],p}},
            \end{equation}
            with $\pi^V(x_i=1;p,x_{<i})=1-\pi^V(x_i=0;p,x_{<i})$. If the denominator in Equation \ref{eq:pi} equals zero, both actions are assigned equal probabilities.
        \end{definition}

        This procedure converts a set of hypothesis value functions $V$ into a sampling policy. The resulting samples are sequentially constructed candidate solutions, where each bit $x_i$ is appended based on an estimated probability of the existence of a solution.

    \subsection{Counterexamples}
        With our algorithm defined, we now present a negative result by constructing examples of hard instances. To do so, we introduce an operation that aggregates multiple CNF-SAT instances. This operation is formally defined as follows:
        \begin{definition}\label{def:aggregation}
            Let $p_1,\ldots,p_K$ be CNF-SAT instances over $n_1,\ldots,n_K$ variables, respectively. Additionaly, let $I_1,\ldots,I_K$ be index lists over $n=n_1+\ldots+n_K$ variables such that $\abs{I_1}=n_1,\ldots,\abs{I_K}=n_K$, and all the elements in these lists are distinct. An \emph{aggregation} of $p_1,\ldots,p_K$ with index lists $I_1,\ldots,I_K$ produces a new CNF-SAT instance $p$ over $n$ variables, where all the literals in the clauses are mapped according to the index lists. For example a clause $x_i\lor\neg x_j$ in $p_k$ becomes $x_{I_{k,i}}\lor\neg x_{I_{k,j}}$ in $p$.
        \end{definition}

        To support our result, we also introduce a monotonic property for value functions.
        \begin{definition}
            A value function $v$ is \emph{monotonic} if, for any CNF-SAT instance $p$ with $n$ variables, any $x\in\{0,1\}^n$, and any $i,j\in[n]$ with $i<j$, the function satisfies: $v(x_{\leq i};p)\geq v(x_{\leq j};p)$.
        \end{definition}
        This property implies that, from any partially constructed solution, each decision can only decrease the value function's output. It is a characteristic of any optimal value function.
        
        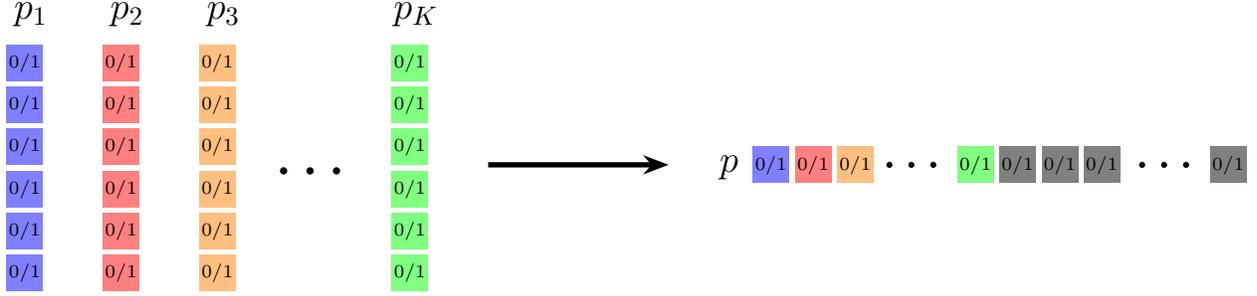
\begin{figure*}[ht]
            \centering
            \begin{tikzpicture}[scale=1.6,
        arrow/.style={->,line width=2pt,>=stealth,>={Stealth[scale=0.8]}}]
            \def\squareSize{0.3} 
            \def\squareGap{0.05} 
        
            \def\rectSize{5}   
        
            \foreach \y in {0,1,...,\rectSize} {
                \fill[blue!50] (0, \y * \squareSize + \y * \squareGap) 
                rectangle ++(\squareSize, \squareSize);
                \node at (0.15, \y * \squareSize + \y * \squareGap + 0.15) {\scriptsize $0/1$};
            }
            \node at (2 * \squareSize + 2 * \squareGap-0.5, 2.3) {\Large \(p_1\)};
        
            \begin{scope}[xshift=0.8cm] 
                \node at (2 * \squareSize + 2 * \squareGap -0.5, 2.3) {\Large \(p_2\)};
                \foreach \y in {0,1,...,\rectSize} {
                    \fill[red!50] (0, \y * \squareSize + \y * \squareGap) 
                    rectangle ++(\squareSize, \squareSize);
                    \node at (0.15, \y * \squareSize + \y * \squareGap + 0.15) {\scriptsize $0/1$};
                }
            \end{scope}
    
            \begin{scope}[xshift=1.6cm] 
                \node at (2 * \squareSize + 2 * \squareGap -0.5, 2.3) {\Large \(p_3\)};
                \foreach \y in {0,1,...,\rectSize} {
                    \fill[orange!50] (0, \y * \squareSize + \y * \squareGap) 
                    rectangle ++(\squareSize, \squareSize);
                    \node at (0.15, \y * \squareSize + \y * \squareGap + 0.15) {\scriptsize $0/1$};
                }
            \end{scope}
    
            \node at (2.56, 1.) {\Huge \dots};
        
            \begin{scope}[xshift=3.2cm] 
                \node at (2 * \squareSize + 2 * \squareGap-0.5, 2.3) {\Large \(p_K\)};
                \foreach \y in {0,1,...,\rectSize} {
                    \fill[green!50] (0, \y * \squareSize + \y * \squareGap) 
                    rectangle ++(\squareSize, \squareSize);
                    \node at (0.15, \y * \squareSize + \y * \squareGap + 0.15) {\scriptsize $0/1$};
                }
            \end{scope}
    
            \draw[arrow] (4., 1.05) -- (5.5, 1.05);
    
            \begin{scope}[xshift=6.2cm,yshift=0.9cm] 
                \node at (-0.2,0.13) {\Large $p$};
                \fill[blue!50] (0, 0) rectangle ++(\squareSize, \squareSize);
                \node at (0.15, 0.15) {\scriptsize $0/1$};
        
                \fill[red!50] (\squareSize + \squareGap, 0) rectangle ++(\squareSize, \squareSize);
                \node at (\squareSize + \squareGap+0.15, 0.15) {\scriptsize $0/1$};
                \fill[orange!50] (2*\squareSize + 2*\squareGap, 0) rectangle ++(\squareSize, \squareSize);
                \node at (2*\squareSize + 2*\squareGap+0.15, 0.15) {\scriptsize $0/1$};
        
                \node at (4 * \squareSize + 3 * \squareGap, 0.15) {\huge \dots};
        
                \fill[green!50] (5 * \squareSize + 4 * \squareGap, 0) rectangle ++(\squareSize, \squareSize);
                \node at (5*\squareSize + 4*\squareGap+0.15, 0.15) {\scriptsize $0/1$};
        
                \foreach \i in {0,1,2} {
                    \fill[gray] (6*\squareSize+\i*\squareSize + 5*\squareGap+\i*\squareGap, 0) rectangle ++(\squareSize, \squareSize);
                    \node at (6*\squareSize+\i*\squareSize + 5*\squareGap+\i*\squareGap +0.15, 0.15) {\scriptsize $0/1$};
                }
        
                \node at (10 * \squareSize + 9 * \squareGap, 0.15) {\huge \dots};
        
                \fill[gray] (11 * \squareSize + 10 * \squareGap, 0) rectangle ++(\squareSize, \squareSize);
                \node at (11*\squareSize + 10*\squareGap+0.15, 0.15) {\scriptsize $0/1$};
            \end{scope}
            \end{tikzpicture}
            \caption{Representation of a counterexample problem. Independent sub-problems $p_0,\ldots,p_K$ are aggregated into a single composite problem $p$, where the first variable of each sub-problem is mapped at the start of the new problem. Under appropriate assumptions, Theorem \ref{thm:BE} states that this construction forces a Bellman equation-based algorithm (Algorithm \ref{alg:BE_search}) to have an exponential runtime in the number of sub-problems $K$.}
            \label{fig:aggregation}
        \end{figure*}
    
        We now have all the necessary elements to state our result. Theorem \ref{thm:BE} demonstrates that, under certain assumptions, an aggregated problem is computationally intractable for Algorithm \ref{alg:BE_search}. The proof leverages the fact that, while the Bellman equation is sufficient to eventually identify an optimal value function and produce a solution, it does so inefficiently in this context.
        
        Importantly, the Bellman equation relies on the outputs of the value function to learn. When a solution attempt fails, the output $0$ of the value function (indicating failure) provides minimal feedback. This lack of informative feedback prevents the algorithm from understanding the reasons behind the failure and learning to anticipate similar failures. 
    
        Theorem \ref{thm:BE} constructs a problem with this difficulty by forcing Algorithm \ref{alg:BE_search} to make several hard independent binary decisions at the start. If any one of these decisions is incorrect, the attempt fails (value $0$) and the algorithm struggles to attribute the failure to any particular decision.

        A representation of the aggregation operation used in our counterexamples is provided in Figure \ref{fig:aggregation}.

        \begin{restatable}{theorem}{bellmanvaluethm}\label{thm:BE}
            Let $p$ be a CNF-SAT instance over $n$ variables, constructed by an aggregation of CNF-SAT instances $p_1,\ldots,p_K$ using index lists $I_1,\ldots,I_k,\ldots,I_K$, where the first element of each $I_k$ is $k$. 
    
            Let $V$ and $V_1,\ldots,V_K$ represent sets of value functions, and let $v^*$ denote an optimal value function.
            
            Assume the following:
            \begin{enumerate}
                \item The set $V$ factorizes into $V_1,\ldots,V_K$; that is, $v\in V$ iff there exist $v_1\in V_1,\ldots,v_K\in V_K$ such that for all $x\in\{0,1\}^n$ and $i\in[n]$, $v(x_{\leq i};p)=\prod\limits_{k\in[K]}v_{k}(x_{I_k\cap [i]};p)$.

                \item For all $k\in[K]$, either $v^*(0;p_k)=1$ or $v^*(1;p_k)=1$.

                \item For all $k\in[K]$ and $v\in V_k$, either $v(0;p_k)=1$ or $v(1;p_k)=1$.
                
                \item For all $k\in[K]$, if $v^*(0;p_k)=1$, then $\pi^{V_k}(x_0=0;p_k)\leq\pi^{V_k}(x_0=1;p_k)$; otherwise, $\pi^{V_k}(x_0=0;p_k)\geq\pi^{V_k}(x_0=1;p_k)$.
    
                \item Any $v\in V$ is monotonic.
            \end{enumerate}
    
            Under these assumptions, Algorithm \ref{alg:BE_search}, initialized with $V^0=V$ and $p=p$, runs for an expected time of at least $2^{K-1}$ steps.
        \end{restatable}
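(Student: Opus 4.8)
The plan is to reduce the entire runtime question to the first $K$ bits of $p$ — the relabelled copies of the sub-problems' first variables, which by hypothesis sit at positions $1,\dots,K$ — and to show the algorithm is forced to guess all of them simultaneously. Because the counterexamples are built so that (consistently with Assumption~2, which fixes for each $k$ a bit $b^\ast_k$ with $v^\ast([b^\ast_k];p_k)=1$) a satisfying assignment of $p$ must set the first variable of block $k$ to $b^\ast_k$, the event $\mathrm{Check}(x^t;p)$ can occur only when $x^t_k=b^\ast_k$ for every $k\in[K]$. Hence the per-iteration success probability is at most $\Phi^t:=\pi^{V^t}(x_{\leq K}=b^\ast_{\leq K})$. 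My target is the uniform bound $\Phi^t\leq 2^{-(K-1)}$ for every $t$ before termination: this makes the stopping time dominate a geometric variable of mean $2^{K-1}$, giving the theorem. Writing $a_k$ (resp.\ $\tilde a_k$) for the number of $v\in V^t$ evaluating to $1$ on $b^\ast_{<k}$ followed by $b^\ast_k$ (resp.\ by $1-b^\ast_k$), Definition~\ref{def:pi} gives $\Phi^t=\prod_{k=1}^{K} a_k/(a_k+\tilde a_k)$, and at $t=0$ the factorization of Assumption~1 collapses each factor to $\pi^{V_k}(x_0=b^\ast_k;p_k)\leq 1/2$ by Assumption~4, so $\Phi^0\leq 2^{-K}$.

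The core of the proof is to control how the Bellman pruning moves the counts $a_k,\tilde a_k$. First I would use monotonicity (Assumption~5): since every child of a node is valued no higher than the node, the Bellman test in Algorithm~\ref{alg:BE_search} can delete a monotone $v$ only at the unique deepest prefix of $x^t$ at which $v$ still predicts $1$, and only if \emph{both} children of that prefix are predicted $0$. Feeding in Assumption~3 — each factor $v_k$ predicts $1$ on at least one first bit — yields the key ``no learning at the first bits'' lemma: if the sampled trajectory receives value $0$ at some position $\leq K$, then the sibling first bit is still valued $1$, the Bellman equation already holds there, and $v$ is not removed. Equivalently, a hypothesis is discarded at step $t$ only if it predicts $1$ on the whole realized first-bit prefix $x^t_{\leq K}$.

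I would then push this lemma into an invariant. Partitioning the hypotheses consistent with $b^\ast_{<k}$ by whether they accept only $b^\ast_k$, only $1-b^\ast_k$, or both, the lemma shows that a trajectory taking the correct bit $k$ deletes only functions of the first or third kind, whereas a trajectory taking the wrong bit $k$ deletes only functions of the second or third kind. The first case preserves $a_k\leq\tilde a_k$ and hence the factor $a_k/(a_k+\tilde a_k)\leq 1/2$; the adverse case — a trajectory that commits to the wrong bit $k$ yet fails only deeper inside block $k$ — can erode $\tilde a_k$ and is the main obstacle. The decisive observation I must exploit is that such a trajectory is first correct on bits $1,\dots,k-1$, an event of probability at most $2^{-(k-1)}$ under the very invariant I am maintaining, so these harmful deletions are too infrequent to drive any factor above $1/2$ before the search itself halts. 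Making this self-referential bookkeeping rigorous is the delicate step, and it is exactly what forces me to surrender one factor and prove only $\Phi^t\leq 2^{-(K-1)}$ rather than $2^{-K}$.

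Finally, with $\Phi^t\leq 2^{-(K-1)}$ established for all $t$, the conditional probability of outputting a genuine solution at any iteration is at most $2^{-(K-1)}$ irrespective of the past, so a standard geometric-domination argument gives an expected running time of at least $2^{K-1}$ steps.
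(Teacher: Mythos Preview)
Your reduction to the first $K$ bits and your argument that the Bellman pruning removes nothing at depths $i<K$ (via monotonicity and Assumption~3) are exactly what the paper does. The divergence is in the invariant you try to maintain afterwards, and that is where the proposal has a real gap.

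The paper does not attempt to control the per-coordinate ratios $a_k/(a_k+\tilde a_k)$. Instead it reads Assumptions~(1) and~(3) together as forcing every $v\in V$ to evaluate to $1$ on \emph{exactly one} length-$K$ prefix $z^v$ (so your ``third kind'' is empty). Then any $v$ removed at step~$t$ necessarily has $z^v=x^t_{\leq K}$, and the effect on $\pi^{V^{t+1}}$ restricted to length-$K$ prefixes is to shrink the mass at $x^t_{\leq K}$ while multiplying the probability of every other prefix by one and the same positive factor. The invariant the paper carries is therefore global and purely order-theoretic: $y^\ast$ remains the minimum-probability element of a set $S^t$ of size at least $2^K-t$ (obtained by discarding $x^t_{\leq K}$ from $S^{t-1}$). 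This gives $\Phi^t\leq 1/(2^K-t)$ and a telescoping product yields the $2^{K-1}$ bound with no self-referential bookkeeping at all.

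Your factor-wise invariant $a_k\leq\tilde a_k$ is genuinely not preserved by a single update, and your proposed repair via the rarity of the adverse trajectories is exactly the step you leave open. Worse, your ``decisive observation'' is false under the inclusive reading your tripartite partition presupposes: if some $v_j$ with $j<k$ accepts both first bits, a deleted $v$ (which only needs $v_j(x^t_j)=1$) can still satisfy $v_j(b^\ast_j)=1$ without $x^t_j=b^\ast_j$, so $\tilde a_k$ can erode even along trajectories that are wrong before position~$k$. Under the exclusive reading the paper uses, the third kind is empty and your partition is redundant; but then the natural move is the paper's order-preserving argument rather than tracking coordinates one by one.
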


        The complete proof is deferred to Appendix \ref{sec:proofs}. Here, we outline the key ideas. 

        By assumption (2), only one possible assignment for the $K$th first variables leads to a solution. Due to assumptions (1) and (4), this assignment has a low prior probability under $V^0$.

        The algorithm tries to iteratively improve its prior through the Bellman equation. However, due to assumptions (1), (3), and (5), only a small portion of the hypothesis space is inconsistent with the equation on the generated states. Consequently, the prior is not substantially improved at each iteration.
        
        By quantifying these statements, it follows that the algorithm requires $2^{K-1}$ expected steps to solve the problem.

        \paragraph{Discussion of Assumptions in Theorem \ref{thm:BE}}
        
        Assumption (1) requires that the prior over value functions $V^0$ reflects the structure of the aggregated problem. Specifically, the set of value functions must be factorizable into sets of value functions corresponding to each individual sub-problem. This assumption is natural when sub-problems are independent, meaning that learning something about one sub-problem does not aid in solving the others.

        Assumption (2) posits that the first bit of each sub-problem is critical to solving that sub-problem. SAT instances exhibiting this property can be straightforwardly constructed. Assumption (3) asks that the value functions in the prior $V^0$ reflect that structure. Any optimal value function satisfies this condition.
    
        Assumption (4) asks for uncertainty in the prior $V^0$. With this condition, the prior $V^0$ does not provide sufficient guidance to decide optimally the first bit of each sub-problem. This can be the case if deciding the first bits of the SAT instances is difficult due to a computational barrier in evaluating value estimates. This assumption ensures that each sub-problem necessitates some degree of search to be solved.
    
        Assumption (5) stipulates that any value function in the prior is monotonic. Monotonicity is a reasonable property, as any optimal value function for SAT instances must be monotonic.
    
        \paragraph{Interpretation of Theorem \ref{thm:BE}}
        Theorem \ref{thm:BE} highlights a key limitation of relying on the Bellman equation to refine a prior over value functions: when faced with aggregated problems that are not directly solved by the prior, the resulting problem can become computationally intractable. For the algorithm, the expected running time grows at least exponentially with the number of aggregated sub-problems. 
        
        This issue lies in the inability of the Bellman equation to effectively decompose the aggregated problem, thereby hindering the learning of an optimal value function that could guide the search.

        The relevance of our result, and its interpretation as a limitation, are based on the intuition that a ``good'' algorithm should not struggle from the aggregation of several sub-problems. Ideally, solving an aggregated problem should only incur a modest computational overhead compared to solving each problem independently.
    
        In contrast, SAT solvers based on resolution techniques \citep{silva1996grasp,biere2009handbook,knuth2015art}, straightforwardly leverage the structure of an aggregated problem, and thus do not suffer from the identified issue. We provide a formal statement for this observation in Appendix \ref{sec:sat}.

\section{Limitation on the Bellman Equation with Universal Value Functions} \label{sec:Bell-universal}

    Provided a goal to achieve, a common approach in RL is to sample trajectories using some initial exploration policy, then improve the policy by reinforcing behaviors that lead to the goal. However, there is a common pitfall with this approach: the goal can be hard to achieve with the initial policy, leading to a lack of feedback to improve the policy.
    
    To address this challenge, Hindsight Experience Replay (HER) was introduced \citep{andrychowicz2017hindsight}. HER deviates from traditional RL by learning a \emph{universal value function} rather than focusing solely on the task-specific value function. A typical use of universal value functions is to estimate the reachability between states, predicting whether any particular state $b$ can be reached from any other state $a$. Then to reach a desired goal-state $g$, actions with high estimates of reaching $g$ are taken.

    However, despite HER's design to create feedback in challenging environments with sparse rewards, it can still overlook critical information in the problems. Moreover, under assumptions, the issue is independent of the prior used over universal value functions. Consequently, like classical value functions, improving the prior does not address the core limitation.
    
    To prove this limitation, we construct a counterexample similar to the one presented in the previous section. In this counterexample, a search guided by classical value functions learned using the Bellman equation struggles to find a solution. Furthermore, learning a universal value function in this scenario leverages no more feedback than learning a classical value function, leading to the same struggles in finding a solution. 

    \subsection{The Algorithm: Hindsight Experience Replay}
        As in the previous section, we work with CNF-SAT instances. We model the problem as taking a sequence of $n$ binary decisions with associated states, that incrementally build a candidate solution. To provide a clear goal-state for HER, we introduce a final step with two possible outcomes: $\mathrm{True}$ or $\mathrm{False}$. The state $\mathrm{True}$ indicates that the constructed candidate satisfies the problem, while $\mathrm{False}$ signifies failure. Thus, the target goal-state for HER is $g=\mathrm{True}$.
    
        The state space $S$ for a problem $p$ with $n$ variables is defined as $\{0,1\}^{[n]}\cup\{\mathrm{True},\mathrm{False}\}$, and the set of actions is binary: $\{0,1\}$.
    
        \begin{definition}\label{def:dyn-her}
            Given a CNF-SAT instance $p$ with $n$ variables, its \emph{dynamics $D^p$} is as follows:

            \begin{itemize}
                \item Start in the initial state $[]$.
                \item At each step (up to $n$), append the chosen binary action to the current state, forming a sequence of actions.
                \item For a state of length $n$ represented by the binary vector $x$, transition to $\mathrm{Check}(x;p)$.
            \end{itemize}
    
            The operator $D^p:S\times \{0,1\}\rightarrow S$ implements that dynamics for instance $p$, taking a state and a binary action as input and outputting the next state.
        \end{definition}
        
        We now define universal value functions estimating state-to-state reachability.
        \begin{definition}
            A \emph{universal value function} takes as input a CNF-SAT instance $p$, two states in $S$, and outputs a binary value. Moreover, it satisfies $v(s,s;p)=1$ for any state $s$.
        \end{definition}

        \begin{definition}
            A universal value function is \emph{optimal} if for any CNF-SAT instance $p$ with $n$ variables and any $s_1,s_2\in S$, $v(s_1,s_2;p)=1$ iff there exists a sequence of actions from $s_1$ that leads to $s_2$ under the instance's dynamics (Definition \ref{def:dyn-her}).
        \end{definition}

        As with classical value functions, the Bellman equation can be enforced to learn optimal universal value functions. For a universal value function $v$, an instance $p$, distinct states $a$ and $b$, and the dynamics operator $D^p$, the Bellman equation is:
        \begin{equation}\label{eq:be-uva}
            v(a,b;p)=\max\{v(D^p(a,0),b;p),\,v(D^p(a,1),b;p)\}.
        \end{equation}
        Algorithm \ref{alg:HER} implements this equation. It is initialized with a set of universal value functions representing a prior for SAT-solving. The algorithm iteratively samples candidate solutions with this prior while refining it with the Bellman equation.

        \begin{algorithm}[t]
        \caption{Learning universal value functions estimating state-to-state reachability with the Bellman equation (Hindsight Experience Replay) for CNF-SAT.\\
        Candidate solutions are iteratively sampled guided by a hypothesis set $V$ of universal value functions to reach a given goal-state $g$. Simultaneously, the Bellman equation is enforced on $V$ for pairs of sampled states and $g$. The sampling process is performed according to Definition \ref{def:pi-universal}.}
        \label{alg:HER}
        \textbf{Inputs:} $V^0:$ an initial set of value functions; $p:$ a CNF-SAT instance to solve with $n$ variables; $g:$ a final goal-state to reach
        \begin{algorithmic}
            \For{$t\gets 0,1,\ldots$}
                \State $s^t_0,s^t_1,\ldots,s^t_n,s^t_{n+1}\sim \pi^{V^t}(p,g)$
                \If{$s^t_{n+1}=g$}
                    \State \textbf{output} $s_n^t$
                \EndIf
                \State $V^{t+1}\gets V^t$
                \For{$i\gets 1,\ldots,n$}
                    \State $V^{t+1}\gets \{v\in V^{t+1}|\,v(s_i^t,g;p)=\max\{v(D^p(s_i^t,0),g;p),\,v(D^p(s_i^t,1),g;p)\}\}$
                    \For{$j\gets i+1,\ldots,n+1$}
                        \State $V^{t+1}\gets \{v\in V^{t+1}|\,v(s_i^t,s_j^t;p)=\max\{v(D^p(s_i^t,0),s_j^t;p),\,v(D^p(s_i^t,1),s_j^t;p)\}\}$
                    \EndFor
                \EndFor
            \EndFor
        \end{algorithmic}
        \end{algorithm}
    
        There are numerous pairs of states $a,b$ with which Equation \ref{eq:be-uva} can be enforced. Algorithm \ref{alg:HER} enforces the equation for pairs of states sampled in the same sequence and with the goal-state $g$. This is a key feature of HER that we replicate.

        To guide the construction of candidate solutions, a policy is derived from the universal value function hypotheses. This process follows a rule similar to Definition \ref{def:pi}.
        
        \begin{definition}\label{def:pi-universal}
            Given a set $V$ of universal value functions. The policy $\pi^V$ maps a CNF-SAT instance $p$ with $n$ variables and a goal $g$ to a distribution over sequences of states $s_0,\ldots,s_{n+1}$ in $S$.

            Define $V_{s,g,p}$ as the number of universal value functions in $V$ evaluating to $1$ for inputs $s$, $g$, and $p$: $\abs{\{v\in V|\,v(s,g;p)=1\}}$.

            The sequence of states is determined by the dynamics and the actions sampled at each state. The probability of taking action $0$ at some state $s$ is
            \begin{equation}\label{eq:pi-univ}
                \frac{V_{D^p(s,0),g,p}}{V_{D^p(s,0),g,p}+V_{D^p(s,1),g,p}},
            \end{equation}
            and the probability of action $1$ is the complementary. If the denominator is zero, both actions are taken with equal probabilities.
        \end{definition}

    \subsection{Counterexamples}
        We now define a monotonicity property for universal value functions, it parallels the one introduced earlier for classical value functions.
        \begin{definition}
            A universal value function $v$ is \emph{monotonic} if, for any CNF-SAT instance $p$ with $n$ variables, any state $s\in S$, any $x\in\{0,1\}^n$, and any $i,j\in[n]$ with $i<j$, $v$ satisfies $v(x_{\leq i},s;p)\geq v(x_{\leq j},s;p)$.
        \end{definition}
            
        \begin{restatable}{theorem}{herthm}\label{thm:HER}
            Let $p$ be a CNF-SAT instance with $n$ variables, constructed as an aggregation of CNF-SAT instances $p_1,\ldots,p_K$ with index lists $I_1,\ldots,I_k,\ldots,I_K$, where the first element of each $I_k$ is $k$. 
    
            Let $V$ and $V_1,\ldots,V_K$ represent sets of universal value functions and $g=\mathrm{True}$ denote the goal-state. 
            
            Assume the following:
            \begin{enumerate}
                \item The set $V$ factorizes into $V_1,\ldots,V_K$; that is, $v\in V$ iff there exists $v_1\in V_1,\ldots,v_K\in V_K$ such that, for all $x\in\{0,1\}^n$ and all $i\in[n]$, $v(x_{\leq i},g;p)=\prod\limits_{k\in[K]}v_{k}(x_{I_k\cap [i]},g;p)$.
    
                \item For an optimal value function $v^*$, and for all $k\in[K]$, either $v^*(0;p_k)=1$ or $v^*(1;p_k)=1$.
    
                \item For all $k\in[K]$ and $v\in V_k$, either $v(0,g;p_k)=1$ or $v(1,g;p_k)=1$.
                
                \item For an optimal value function $v^*$, and for all $k\in[K]$, if $v^*(0;p_k)=1$, then $\pi^{V_k}(s_0=0;p_k)\leq\pi^{V_k}(s_0=1;p_k)$; otherwise, $\pi^{V_k}(s_0=0;p_k)\geq\pi^{V_k}(s_0=1;p_k)$.
    
                \item Any $v\in V$ is monotonic.
    
                \item For any $v\in V$, $i,j\in[n]$ with $i\leq j$, and any $x_1\in\{0,1\}^i$, $x_2\in\{0,1\}^j$, $v(x_1,x_2;p)=v^*(x_1,x_2;p)$ where $v^*$ is an optimal universal value function.
    
                \item For an optimal universal value function $v^*$ and any state $s\in S$, if $v^*(s,\mathrm{False};p)=1$, then, for any $v\in V$, $v(s,\mathrm{False};p)=1$.
            \end{enumerate}
    
            Under these assumptions, Algorithm \ref{alg:HER} initialized with $V^0=V$, $p=p$, and $g=g$, runs for an expected time of at least $2^{K-1}$ steps.
        \end{restatable}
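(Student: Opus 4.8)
The plan is to reduce Theorem~\ref{thm:HER} to Theorem~\ref{thm:BE} by projecting each universal value function onto its ``reach-the-goal'' slice. Concretely, to every universal value function $v$ I associate the classical value function $\tilde v(\cdot\,;p)\equaldef v(\cdot,g;p)$, and I set $\tilde V\equaldef\{\tilde v:v\in V\}$ and $\tilde V_k\equaldef\{v(\cdot,g;p_k):v\in V_k\}$ (tracked with multiplicities, indexed by $V$ and $V_k$, so that counts are preserved). First I would check that the execution of Algorithm~\ref{alg:HER} on $(V,p,g)$ projects exactly onto the execution of Algorithm~\ref{alg:BE_search} on $(\tilde V,p)$, and then invoke Theorem~\ref{thm:BE} to obtain the $2^{K-1}$ lower bound on the expected number of steps.

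The first, routine, part is to verify that assumptions (1)--(5) of Theorem~\ref{thm:HER} translate into assumptions (1)--(5) of Theorem~\ref{thm:BE} for $\tilde V,\tilde V_1,\dots,\tilde V_K$. The factorization in assumption~(1) is already stated in terms of $v(\cdot,g;p)=\tilde v(\cdot\,;p)$, so it yields the classical factorization verbatim; assumptions~(2) and~(4) already refer to a classical optimal value function $v^*$ and to $\pi^{V_k}$, which under the projection is exactly the classical policy of Definition~\ref{def:pi}; assumption~(3) becomes ``$\tilde v(0;p_k)=1$ or $\tilde v(1;p_k)=1$''; and monotonicity of $v$ in assumption~(5), specialized to the fixed second argument $g$, is monotonicity of $\tilde v$. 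Crucially, the sampling rule of Definition~\ref{def:pi-universal} with goal $g$ reads off the same counts $V_{s,g,p}=\abs{\{v:v(s,g;p)=1\}}=\abs{\{v:\tilde v(s;p)=1\}}$ that drive Definition~\ref{def:pi}, so at every step the two algorithms induce the same distribution over the sampled assignment $x^t$.

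The heart of the argument, and the step I expect to be the main obstacle, is showing that the additional Bellman eliminations of Algorithm~\ref{alg:HER} -- every update whose ``goal slot'' is an intermediate state $s_j^t$ with $j\le n$ or the terminal state $\mathrm{False}$ -- are \emph{vacuous}, so that the only value functions ever discarded are exactly those discarded by the goal-updates. For $i<n$ the goal-update is literally Equation~\ref{eq:Bellman} for $\tilde v$, i.e.\ the classical Bellman update, so those match $\tilde V^t$ automatically; the extra updates I handle in three cases. (i) For a hindsight pair $(s_i^t,s_j^t)$ with $i<j\le n$, both $s_i^t$ and its successors $D^p(s_i^t,b)$ are prefix states of length $\le j$, so assumption~(6) forces every $v\in V$ to agree with an optimal universal $v^*$ on these evaluations; since $v^*$ satisfies Equation~\ref{eq:be-uva}, the update removes nothing. (ii) For the terminal goal $\mathrm{False}$ on a failed trajectory, $\mathrm{False}$ is reachable from every $s_i^t$ (the trajectory itself witnesses this), so $v^*(s_i^t,\mathrm{False};p)=1$; assumption~(7) then gives $v(s_i^t,\mathrm{False};p)=1$ for all $v\in V$, while reachability of $\mathrm{False}$ from the chosen successor $s_{i+1}^t$ (and the reflexivity $v(s,s;p)=1$ when $i=n$, where $D^p(s_n^t,b)=\mathrm{False}$) makes the right-hand maximum equal to $1$, so again nothing is removed. (iii) The goal-update at $i=n$ collapses, via $D^p(s_n^t,b)=\mathrm{Check}(x^t;p)=\mathrm{False}$, to $v(x^t,g;p)=v(\mathrm{False},g;p)$; using that the goal $g=\mathrm{True}$ is not reachable from the terminal state $\mathrm{False}$, hence $v(\mathrm{False},g;p)=0$ for the value functions in the prior, this reproduces exactly the classical consistency update $\tilde v(x^t;p)=\mathrm{Check}(x^t;p)=0$. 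Together, (i)--(iii) show the projected hypothesis-count dynamics of Algorithm~\ref{alg:HER} are identical to those of Algorithm~\ref{alg:BE_search} on $(\tilde V,p)$.

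With the reduction in place, the expected running time of the two executions is the same functional of their common trajectory distribution, so Theorem~\ref{thm:BE} applied to $(\tilde V,p)$ -- whose hypotheses we verified -- yields the claimed $2^{K-1}$ lower bound. The delicate points are precisely the bookkeeping in case~(iii): that the prior contains no universal value function asserting the goal is reachable from $\mathrm{False}$ (true for any optimal universal value function, and what pins the final update to the $\mathrm{Check}$ constraint), and the multiset/index accounting that certifies the projected process as a bona fide instance of Algorithm~\ref{alg:BE_search}. The quantitative core -- that the policy factorizes over sub-problems and assigns probability at most $\tfrac12$ to each correct first bit, so a single attempt succeeds with probability at most $2^{-K}$ and the eliminations cannot accelerate this below $2^{K-1}$ expected steps -- is exactly what is packaged inside Theorem~\ref{thm:BE} and need not be re-derived.
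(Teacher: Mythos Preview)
Your proposal is correct and follows essentially the same route as the paper: reduce Algorithm~\ref{alg:HER} to Algorithm~\ref{alg:BE_search} by viewing $v(\cdot,g;p)$ as a classical value function, and show that the extra Bellman updates (second argument a partial state, via assumption~(6); second argument $\mathrm{False}$, via assumption~(7)) remove nothing, so the projected hypothesis set evolves exactly as in Theorem~\ref{thm:BE}. If anything you are slightly more explicit than the paper—you spell out the translation of assumptions (1)--(5) and you isolate case~(iii), the goal-update at $i=n$ and its dependence on $v(\mathrm{False},\mathrm{True};p)=0$, which the paper absorbs into the blanket claim that the goal-updates ``follow the same constraints as in the previous theorem.''
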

    
        The proof is similar to the one of Theorem \ref{thm:BE} and is provided in Appendix \ref{sec:proofs}.
    
        Theorem \ref{thm:HER} relies on similar assumptions to Theorem \ref{thm:BE}. Assumptions (1), (3), (4), and (5) are adapted to universal value functions, while assumption (2) remains unchanged. Assumptions (6) and (7) are newly introduced.
        
        Assumption (6) ensures that every universal value function in the initial set $V^0$ accurately predicts which partial candidate solutions can be reached from given partial candidate solutions. This condition is satisfied by any optimal value function and is independent of the specific problem being solved.
        
        Assumption (7) requires that any value function in $V^0$ can determine whether a partial candidate must lead to a full solution, regardless of subsequent actions. This condition is also satisfied by any optimal universal value function.  
    
        Theorem \ref{thm:HER} demonstrates that an application of HER with universal value functions for state-to-state reachability does not circumvent the issue identified in the previous section regarding the Bellman equation with classical value functions.
        
\section{Discussion} \label{sec:disc}
    \paragraph{Summary}
    In this paper, we defined two algorithms for problem-solving with learning-based guidance in their search processes, Algorithms \ref{alg:BE_search} and \ref{alg:HER}. Both of these algorithms represent an approach of the RL literature, they enforce the Bellman equation on sampled states, one to learn a value function, the other a universal value function for state-to-state reachability (Hindsight Experience Replay).
    
    To analyze these algorithms, we constructed counterexample problems that, by design, have a clear structure to exploit, yet remain provably challenging for the respective algorithms, Theorems \ref{thm:BE} and \ref{thm:HER}. Our proofs demonstrate that these algorithms struggle because they do not leverage rich feedback from their failed attempts to improve their guidance.

    \paragraph{Limitations}
    One limitation of our theoretical analysis is its reliance on counterexamples, which, by nature, are specific problems. This raises the question: do these counterexamples reflect broader limitations on the studied algorithmic approaches with practical problems?

    We argue that they do. Our counterexamples and analysis are based on aggregating sub-problems into a larger problem, an expected structure in practical applications. Also, although we use SAT instances to build minimal clean counterexamples, our analysis is not tied to this problem, and we believe that the results generalize to other domains, such as Automated Theorem Proving.
    
    A second limitation of our theoretical analysis is its reliance on specific algorithmic implementations. While our algorithms are straightforward implementations of each approach, they may inadvertently introduce specific implementation choices not part of the original approaches and limit the scope of our findings. 
     
    For example, our algorithms use Bayesian Learning by iteratively updating a set of hypotheses according to the sampled data. While Bayesian Learning is an idealized framework to describe a priori information and learning, it is not a common choice in the deep RL literature.
     
    Another example of a specific choice in our algorithms is the use of universal value functions to model state-to-state reachability. In their original definitions, universal value functions and HER are more flexible, they can be used to model the reachability of some abstract goals not just future states. This additional flexibility could improve feedback and allow to avoid the studied limitation for algorithms developed in that direction.

    \paragraph{Conclusion}
    Despite the use of specific algorithms and counterexamples, we believe our findings highlight a broad practical limitation on algorithms that rely on the Bellman equation enforced at sampled states. These include model-free Deep RL methods like Deep Q-learning \citep{mnih2013playing} and actor-critic methods \citep{schulman2017proximal}, as well as some model-based methods like AlphaZero \citep{silver2018general}, all of which depend on the Bellman equation to learn to guide the search.

    Our work formalizes inefficiencies in these classical algorithmic ideas of RL. These insights can inform the development of novel algorithms better leveraging learning to accelerate search, leading to more effective problem-solving techniques.

\section*{Acknowledgements}
RJ is a FNRS honorary Research Associate. This project has received funding from the European Research Council (ERC) under the European Union's Horizon 2020 research and innovation programme under grant agreement No 864017 - L2C, from the Horizon Europe programme under grant agreement No101177842 - Unimaas, and from the ARC (French Community of Belgium)- project name: SIDDARTA.

\bibliography{biblio}
\bibliographystyle{bib_style}

\clearpage
\newpage
\appendix

\section{Proofs of the Main Theorems}\label{sec:proofs}

    \bellmanvaluethm*
    \begin{proof}
        By condition (2) each first variable of all instances $p_1,\ldots,p_K$ must be either $0$ or $1$ to be a solution. Thus, any solution to instance $p$ must have its first $K$ variables equal to some unique binary vector $y^*\in\{0,1\}^K$.

        We derive an upper bound on the probability of generating this initial vector, then conclude from it the lower bound on the running time stated in the Theorem.

        We prove by induction on $t$ (the number of loops performed) that the probability of generating $y^*$ is the smallest inside some set of $2^K-t$ binary sequences of length $K$ as long as $y^*$ has not been generated.

        For $t=0$, by (1) and (4), $\pi^{V^0}$ assigns the lowest probability on $y^*$ over the $2^K$ possibilities.

        Assuming the inductive hypothesis holds at some loop $t$ for some set $S$ of sequence with size $2^K-t$ and the sequence $x^t_{\leq K}$ is not $y^*$. We look at the effect of the Bellman equations enforced on the set of value functions with two cases: $i<K$ and $i\geq K$ with the additional consistency equation for complete assignments.
        
        If a Bellman equation is applied for $i<K$ then we show that no value functions are removed. Two possibilities: first, if $v(x_{\leq i}^t;p)=0$ then, by the monotonicity assumption (5), $v([x_{\leq i}^t,0];p)=0$ and $v([x_{\leq i}^t,1];p)=0$; second, if $v(x_{\leq i}^t;p)=1$ then, by (1) and (3), at least $v([x_{\leq i}^t,0];p)$ or $v([x_{\leq i}^t,1];p)$ must evaluate to $1$. The equation is thus satisfied in both possibilities.

        Now, in the second case, the Bellman equation is applied with $i\geq K$, including the condition ensuring consistency with $\mathrm{Check}$. Let $v\in V^t$ be a value function that is removed by the Bellman equation for some $i\geq K$. We know $v(x^t_{\leq K};p)=1$ by the monotonicity assumption (5), otherwise the value function evaluates to $0$ in all possible continuations and the Bellman equation is satisfied. Knowing this, by assumptions (1) and (3), the value function outputs $1$ for $x^t_{\leq K}$ and $0$ for all the other sequences of length $K$. Consequently, from Definition \ref{def:pi}, at most one element of $S$ decreases in probability, and all the others increase by a common normalizing factor. This positive factor preserves the order between the sequences' probabilities of being generated among this set, and thus the inductive hypothesis holds for $t\gets t+1$.

        This bound leads to an expected number of loops of at least $2^{K-1}$ to generate $y^*$ and thus a solution.
    \end{proof}

    \herthm*
    \begin{proof}

        We reduce our claim to that of Theorem \ref{thm:BE}. The distribution used to sample candidate solutions follows the same constraints as in Theorem \ref{thm:BE}, with the value functions that guide the search, $v(.;p)$, being replaced by universal value functions evaluated with the goal $g$: $v(.,g;p)$ for $v\in V$. The set $V^t$, when applied with $g$, follows the same constraints as in the previous theorem, with the exception that the Bellman equation is not only enforced with the final state $g$ but also with the state $\mathrm{False}$ and states corresponding to partial solutions (innermost statement in the loops of Algorithm \ref{alg:HER}).
        
        We study the situation where the Bellman equation is applied with the state $s_{n+1}^t=$False as a second argument. The term $v(s_i^t,s_{n+1}^t;p)$ is either $0$ or $1$. If it evaluates to $0$, by assumption (7), False is not reachable from $s_i^t$ and thus $s_{n+1}^t=\mathrm{False}$ is not possible, we have a contradiction, this case is not possible. For the case $v(s_i^t,s_{n+1}^t;p)=1$, we remark that at least one of $D^p(s_i^t,0)$ or $D^p(s_i^t,1)$ leads to the state False (since it was reached in this trajectory), thus, by assumption 7, one of $v(D^p(s_i^t,0),s_{n+1}^t;p)$ or $v(D^p(s_i^t,1),s_{n+1}^t)$ evaluates to $1$. Consequently, all the universal value functions in $V^t$ satisfy the Bellman equation with $s_{n+1}^t/\mathrm{False}$ as a second argument, and none are removed.

        The Bellman equation procedure is also called with the second state corresponding to a partial solution. By assumption (6), in that case, any universal value function in $V^0$ (and thus $V^t$) already matches the output of an optimal universal value function and is consistent with the Bellman equation. Again, none of the universal value functions are removed by enforcing the equation.

        Therefore, under the assumptions, the universal value functions in set $V^t$ evaluated with $g=\mathrm{True}$ in Algorithm \ref{alg:HER}, are updated identically to the set of value functions in Algorithm \ref{alg:BE_search}. The Theorem follows.
    \end{proof}

\section{Positive Result for a Resolution-Based SAT Solver}\label{sec:sat}
    Our main text presented a design to produce counterexamples for algorithms based on the Bellman equation approach with classical value functions and universal value functions (HER). In this section, we prove that our design does not imply a limitation for SAT solvers based on the resolution operator. For these algorithms, the aggregation structure designed in our counterexamples does not incur an intractable computational cost. Enforcing the idea that being unable to decompose this structure is a limitation. 
    
    We note that this result applies to an algorithm constrained by a fixed variable assignment ordering, thus under the same context as the negative results for Bellman-based RL.

    We first define a resolution-based SAT solver, and then provide the positive result.

    Our SAT solver is a minimalistic version of modern conflict-driven clause learning (CDCL) SAT solvers \citep{silva1996grasp,biere2009handbook,knuth2015art}. At their core is a procedure that iteratively tries to generate a solution, and, simultaneously to these attempts, learns from its failures. This working principle is thus very similar to RL algorithms, the main difference is found where learning happens. Here, learning consists in deducting new logical formulas with the resolution operator.

    We define this operator here, it allows us to learn a new clause by deduction.
    \begin{definition}
        For some CNF-SAT instance, let $x_i$ be some variable and $l_1,\ldots,l_m,l_1',\ldots l_{m'}'$ be some literals. Given two clauses of the form $x_i\lor l_1\lor \ldots \lor l_m$ and $\lnot x_i\lor l_1'\lor \ldots,l_{m'}'$, the \emph{resolution operator} produces the new clause $l_1\lor\ldots\lor l_m\lor l_1'\lor \ldots \lor l_{m'}'$.
    \end{definition}

    To describe the algorithm, we need two other concepts. As the algorithm incrementally builds a solution, some variables will be assigned and others not. We say that a variable $x_i$ is \emph{forced} to $1(/0)$ if a clause as the form $(\lnot)x_i\lor l_1,\ldots\lor l_m$ where $l_1,\ldots, l_m$ are all literals that evaluate to False due to already assigned variables. A \emph{conflict} is when the variable is forced to $0$ and $1$ by different clauses, in which case the current assignments cannot lead to a solution and these clauses can be resolved.

    We now define Algorithm \ref{alg:resolution} iteratively performing generation loops where a candidate solution is incrementally generated.

    \begin{algorithm}[t]
    \caption{Resolution-based SAT solver.\\
    Iteratively attempt to generate a solution, each attempt incrementally assigns the variables until a complete solution is found or there is a conflict. In case of a conflict, the resolution operator is applied and the new clause is learned to avoid it in future attempts.}
    \label{alg:resolution}
    \textbf{Inputs:} $C^0:$ the set of clauses of a CNF-SAT instance with $n$ variables
    \begin{algorithmic}
        \For{$t\gets 1,2,\ldots$}
            \For{$i\gets 1,\ldots,n$}
                \If{contradiction on $x_i$ with clauses in $C^{t-1}$} 
                    \State $C^t\gets C^{t-1}\cup\{\mathrm{resolution}(c_0,c_1)|\,c_0,c_1\in C^{t-1},\, c_0,c_1\text{ in conflict}\}$
                    \State break
                \ElsIf{$x_i$ is forced to $1$}
                    \State $x_i\gets 1$
                \ElsIf{$x_i$ is forced to $0$}
                    \State $x_i\gets 0$
                \Else
                    \State $x_i\gets 0$
                \EndIf
            \EndFor
            \If{$x$ is completely assigned}
                \State \textbf{output} $x$
            \Else
                \State reinitialize assignments
            \EndIf
        \EndFor
    \end{algorithmic}
    \end{algorithm}

    \begin{theorem}\label{thm:resolution}
        Let $p$ be an aggregation (Definition \ref{def:aggregation}) of CNF-SAT instances $p_1,\ldots,p_K$ with index lists $I_1,\ldots,I_K$. Let $p_1',\ldots,p_K'$ be the same CNF-SAT instances with their variables' indices permuted by their respective order in $I_1,\ldots,I_K$, and let $T_1,\ldots,T_K$ be the number of failed attempt loops of Algorithm \ref{alg:resolution} on those instances.

        Then Algorithm \ref{alg:resolution} with input $p$ performs at most $\sum_{k=1}^K T_k$ failed attempt loops.
    \end{theorem}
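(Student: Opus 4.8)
The plan is to exploit the defining feature of the aggregation (Definition \ref{def:aggregation}): since the index lists $I_1,\ldots,I_K$ are disjoint, the clauses coming from distinct sub-problems involve disjoint sets of variables. Consequently the clause set maintained by Algorithm \ref{alg:resolution} partitions at every step $t$ as $C^t = C_1^t \cup \cdots \cup C_K^t$, where $C_k^t$ collects exactly the clauses over the variables indexed by $I_k$. I would first record that this partition is preserved by the algorithm: the resolution operator combines two clauses only through a complementary literal $x_i/\lnot x_i$, and every clause containing $x_i\in I_k$ lies in $C_k^t$; hence each learned clause again belongs to a single block $C_k^t$, and resolving a conflict on a variable of $I_k$ can only enlarge $C_k^t$.

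Next I would establish the decoupling lemma: when the inner loop reaches a variable indexed in $I_k$, the tests ``forced to $1/0$'' and ``contradiction'' depend only on clauses of $C_k^t$ and on the values already assigned to the other variables of $I_k$, since every other literal in a $C_k^t$-clause is an $I_k$-variable. Because Algorithm \ref{alg:resolution} assigns variables in the fixed order $1,\ldots,n$ and reinitializes at the start of each attempt, the subsequence of decisions it makes on the variables of $I_k$, taken in increasing order of aggregated index, coincides with the decisions the standalone solver makes on $p_k'$, whose variables have been relabelled precisely so that this aggregated order becomes $1,\ldots,n_k$. This is where the permutation defining $p_k'$ is essential: it aligns the two processing orders and makes the two runs step-for-step identical whenever the corresponding block clause sets agree.

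With these facts I would run an induction on the failed attempts of the aggregated solver, maintaining the invariant that after block $k$ has triggered $m_k$ conflicts, its clause set $C_k^t$ equals the set $C_k^{(m_k)}$ reached by the standalone solver on $p_k'$ after its first $m_k$ failed attempts. Each aggregated attempt breaks at the first conflicting variable, which belongs to a single block $k$; by the decoupling lemma this conflict, the conflicting clause pairs, and the clauses obtained by resolving all of them coincide with those of the standalone run in state $C_k^{(m_k)}$, so the attempt advances exactly that block to $C_k^{(m_k+1)}$ and leaves the others unchanged. Since the standalone solver on $p_k'$ has only $T_k$ failed attempts, its terminal state $C_k^{(T_k)}$ yields a complete, conflict-free pass; by the lemma the aggregated solver then never conflicts on $I_k$ while $C_k^t=C_k^{(T_k)}$, so block $k$ is charged at most $T_k$ conflicts. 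Summing over $k$, the number of failed attempts is at most $\sum_{k=1}^K T_k$, and once every block has reached its terminal state a full pass produces a complete assignment and the algorithm outputs.

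The main obstacle is maintaining the invariant under interleaving: a single aggregated attempt may be cut short by a conflict in some block $k'$ whose conflict variable has a smaller aggregated index than the conflict another block would produce, so the blocks do not advance in lockstep with the attempts. I must therefore phrase the induction purely per block, counting conflicts charged to each block and showing that its learned-clause sequence $C_k^{(0)},C_k^{(1)},\ldots$ is independent of the interleaving order, rather than trying to match attempts of the aggregated run to attempts of any one standalone run. A secondary point requiring care is the ``resolve all conflicting pairs'' step: I need that at a break the only conflict is on the single variable $x_i$, because all earlier variables were assigned consistently, so every resolved pair lies in the same block and the learned clauses match the standalone computation exactly.
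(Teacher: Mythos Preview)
Your proposal is correct and follows essentially the same approach as the paper's proof: both partition the clause set by block, observe that resolution preserves this partition, track per-block conflict counters $m_k$ (the paper writes these as $t_k$), and argue by induction that each aggregated failed attempt advances exactly one block's counter while leaving the others' clause sets unchanged. Your treatment is somewhat more explicit than the paper's---in particular your ``decoupling lemma'' and the observation that all conflicting clause pairs at a break lie in the single block containing the conflict variable are left implicit in the paper---but the underlying argument is the same.
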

    \begin{proof}
        We show that the problems are concurrently and independently solved by Algorithm \ref{alg:resolution}. Each iteration of the algorithm on $p$ simulates the iteration of the algorithm on one of the sub-problems.
        
        The state of the algorithm evolving across loops is described by the set of clauses $C^t$. Let $C^{t_k}_k$ be the set of clauses of the algorithm on problem $p_k'$ at some loop $t_k\in\N$. Inductively, at any loop $t\in\N$, $C^{t-1}$ decomposes into the set of clauses $C^{t_1-1}_1,\ldots, C^{t_K-1}_K$ for some $t_1,\ldots,t_K\in\N$ up to a variable mapping (corresponding to the inverse of the index lists, $I_k$, composed with the permutation from $p_k$ to $p_k'$).

        This is true at initialization $t=1$ with $t_1,\ldots,t_K=1$.

        If the inductive hypothesis is true at some loop $t$ for some $t_1,\ldots,t_k$, then the generation process assigns identically the variables as the algorithm applied on the individual sub-problems at respective loops $t_1,\ldots,t_k$ and has a conflict at the same variables with the same clauses up to the variable mapping. Let $x_i$ with $i\in I_k$ be the first variable leading to a conflict in the assignment process. Up to the variable mapping, the same clauses lead to a conflict as in problem $p_k'$, thus the resolution operator computes the same resolved clauses that are added to $C_k^{t_k}$ at iteration $t_k$. The induction hypothesis is proved for the loop $t+1$.
        
        Moreover, at each attempt loop, one of the $t_k$ is incremented while the others stay the same. After, $\sum_{k\in[K]}T_k$ loops the assignment process is successful on all the sub-problems, and a solution is found.

    \end{proof}

    Theorem \ref{thm:resolution} states that for Algorithm \ref{alg:resolution} solving the aggregation of a set of problems or solving each problem in the set independently takes similar time (up to a permutation of the variables' indices). The proof leverages the fact that the resolution operator keeps the sub-problems independent.

\end{document}